\newcommand{\comment}[1]{}
\newtheorem{theorem}{Theorem}
\def\eqref#1{equation~\ref{#1}}
\def\1{\mathbbm{1}}
\DeclareMathAlphabet{\mathsfit}{\encodingdefault}{\sfdefault}{m}{sl}
\SetMathAlphabet{\mathsfit}{bold}{\encodingdefault}{\sfdefault}{bx}{n}
\def\gA{{\mathcal{A}}}
\def\gD{{\mathcal{D}}}
\def\gF{{\mathcal{F}}}
\def\gX{{\mathcal{X}}}
\def\gY{{\mathcal{Y}}}
\newcommand{\R}{\mathbb{R}}
\DeclareMathOperator*{\argmin}{arg\,min}
\def\deriv{\partial}
\def\X{\gX}
\def\Y{\gY}
\def\A{\gA}
\def\D{\gD}
\def\lossfn{\mathcal{L}}
\def\dataset{\mathcal{D}}
\def\scorefn{\mathcal{R}}
\def\biasfn{\mathcal{B}}
\def\slack{\beta}
\def\maxslack{\slack_{\mathrm{max}}}
\def\PP{\mathrm{PP}}
\def\opt{\mathrm{OPT}}
\def\tpr{\mathrm{TPR}}
\def\fpr{\mathrm{FPR}}
\def\pr{\mathrm{PR}}
\newtheorem{definition}{Definition}
\theoremstyle{definition}
\begin{document}

%

%
\runningauthor{Nachum and Jiang}

\twocolumn[

\aistatstitle{Group-based Fair Learning Leads to Counter-intuitive Predictions}

\aistatsauthor{Ofir Nachum$^*$ \And Heinrich Jiang$^*$}

\aistatsaddress{Google Research \And Google Research} ]

\begin{abstract}
A number of machine learning (ML) methods have been proposed recently to maximize model predictive accuracy while enforcing notions of group parity or {\em fairness} across sub-populations.
We propose a desirable property for these procedures, {\em slack-consistency}: For any individual, the predictions of the model should be monotonic with respect to allowed {\em slack} (i.e., maximum allowed group-parity violation).
Such monotonicity can be useful for individuals to understand the impact of enforcing fairness on their predictions.
Surprisingly, we find that standard ML methods for enforcing fairness violate this basic property.
Moreover, this undesirable behavior arises in situations agnostic to the complexity of the underlying model or approximate optimizations, suggesting that the simple act of incorporating a constraint can lead to drastically unintended behavior in ML.
We present a simple theoretical method for enforcing slack-consistency, while encouraging further discussions on the unintended behaviors potentially induced when enforcing group-based parity.
\end{abstract}
\section{INTRODUCTION}

Algorithmic fairness in machine learning (ML) has recently become an important concern. Without appropriate intervention during the pre-processing, training, or inference stages of an ML procedure, the resulting model can be biased against certain groups~\cite{angwin,hardt2016equality,jiang2019identifying}. Accordingly, enforcing group-based fairness notions such as demographic parity \cite{dwork2012fairness} and equal opportunity \cite{hardt2016equality} is a difficult and active research problem. 

One common approach to enforcing fairness is {\em post-processing}. In post-processing, one first learns a score function without concerns for fairness, and then decision thresholds are chosen for each group to ensure that various notions of group parity are satisfied~\cite{doherty2012information,feldman2015computational,hardt2016equality}. 
Other than post-processing, {\em constrained optimization} provides another common approach, in which enforcing group parity is framed as a constraint on a minimum loss objective. The resulting constrained optimization problem is then solved through the use of Lagrange multipliers~\cite{zafar2015fairness,goh2016satisfying,eban2017scalable,cotter2018two,cotter2018optimization,agarwal2018reductions,kearns2017preventing}.


In practice, a model is usually not required to perfectly satisfy the fairness constraint, but rather some amount of \emph{slack} is allowed to trade-off an allowable degree of bias with better accuracy~\cite{zafar2015fairness}. In this paper, we explore the behavior of standard machine learning fairness procedures as this amount of allowable slack changes. We propose a desirable property, \emph{slack-consistency}, which expects that fair learning procedures should give predictions that are monotonic in the amount of slack allowed. Such slack-consistency is intuitive: one would expect that for any given individual, there would be a prediction under no slack (i.e. perfectly satisfy fairness constraint) and a prediction under infinite slack (i.e. unconstrained), and that for any slack in between, the predictions would change monotonically between these two extremes. Moreover, given this behavior, we can move towards \emph{explainable} ML fairness, where individuals can understand the impact of their predictions based on the amount of enforced fairness, thus making the process more transparent.

We show that, surprisingly, popular group-based fairness methods fail to satisfy slack-consistency in both real-world and simple synthetic datasets, and this failure arises in situations agnostic to the complexity of the underlying model.
Even for the post-processing method, which may be implemented agnostic to optimization errors (using an exhaustive search over thresholds), slack-inconsistency can arise in a variety of settings.
Our findings thus show that the consequences of imposing group-based fairness notions on ML models are poorly understood, and these consequences are often at odds with intuitive beliefs of what fairness should encourage an ML model to do.
We propose a simple ML fairness method which provably possesses slack-consistency but at the same time encourage further discussions on the utility of imposing group-based fairness notions in general.

\section{BACKGROUND}\label{sec:background}
We consider a fair machine learning setting, in which individuals correspond to pairs $(x,a)$ where $x\in\X$ is a vector of {\em features} associated with the individual and $a\in\A$ is an additional feature corresponding to group membership.  At times we will write $\A(x)$ as a function which returns the group membership of $x$.  
For simplicity, we assume two groups; i.e, $\A=\{1, 2\}$.  We are given some dataset $\D = \{(x_i,a_i,y_i)\}_{i=1}^N$ where $y\in\Y$ is an observed {\em label}.  For simplicity, we consider the binary classification setting; i.e, $\Y=\{0,1\}$. 

In standard machine learning, one is tasked with finding some (potentially stochastic) classifier $f:\X\times\A\to[0,1]$ within some family $\gF$ which minimizes a loss function $\lossfn(f,\D)$ on the dataset (e.g., mis-classification rate).  When group-based fairness notions are imposed, the task is modified to finding an optimal loss classifier $f$ within the subset of {\em unbiased} functions of $\gF$; i.e., $\biasfn(f, \D)=0$, where $\biasfn$ measures the bias of $f$ on $\D$.

Many works in the literature express the bias function $\biasfn$ in terms of some disparity of the predictions of $f$ between the two groups.
For example, {\em demographic parity}~\cite{dwork2012fairness} measures the unfairness of $f$ as the difference in positive prediction rates:
\begin{align*}
    \biasfn_{\mathrm{DemPar}}(f, \D) := &\frac{\sum_{i=1}^N f(x_i, a_i)~\1[a_i=1]}{\sum_{i=1}^N \1[a_i=1]} \\
    &- \frac{\sum_{i=1}^N f(x_i, a_i)~\1[a_i=2]}{\sum_{i=1}^N \1[a_i=2]}.
\end{align*}
Demographic parity, although simple, has been criticized for unnecessarily encouraging poor classifiers in pursuit of fairness~\cite{hardt2016equality,dwork2012fairness,kleinberg2016inherent}.  Accordingly, some works propose to define bias in terms of {\em equal opportunity}~\cite{hardt2016equality}, which measures disparity in true positive prediction rates:
\begin{align*}
    \biasfn_{\mathrm{EqOpp}}(f, \D) := &\frac{\sum_{i=1}^N f(x_i, a_i)~\1[a_i=1, y_i=1]}{\sum_{i=1}^N \1[a_i=1, y_i=1]} \\
    &- \frac{\sum_{i=1}^N f(x_i, a_i)~\1[a_i=2, y_i=1]}{\sum_{i=1}^N \1[a_i=2, y_i=1]}.
\end{align*}
Unlike demographic parity, the true classifier $f(x_i)=y_i$ always satisfies equal opportunity.

Additional variants of these constraints exist in the literature.  For example, {\em disparate impact}~\cite{feldman2015certifying} which enforces demographic parity while restricting the classifier $f$ to not use the protected attribute $a_i$ in its prediction $f(x_i,a_i)$.  The notion of {\em equal odds}~\cite{hardt2016equality} augments equal opportunity to enforce both equal true positive rates and equal false positive rates.
For simplicity, we will restrict our focus in this paper to the notions of demographic parity and equal opportunity, although our discussions and conclusions may easily extend to these more complicated notions of fairness.

\section{SLACK-CONSISTENCY}\label{sec:slack_consistency}
In many instances of fair machine learning, a model is not enforced to be perfectly fair.  Rather, it is enforced to be fair within some slack; i.e., it is allowed to have some bias, but that bias must be bounded $\biasfn(f,\D) < \maxslack$ or bounded absolutely $|\biasfn(f,\D)|<\maxslack$ for some $\maxslack$.
The reasons for this are two-fold: First, a perfectly fair, zero-bias classifier may not be feasible (or desirable) in practice. Second, the initial motivation for learning a fair classifier is often expressed in terms of some allowed amount of bias. For example, legal definitions of fairness often invoke the $p\%$-rule (commonly, the $80\%$-rule): The ratio between positively predicted individuals in one group versus another should not exceed $p\%$
~\cite{zafar2015fairness}.

Since any ML fairness optimization is affected by the allowed slack $\maxslack$ and this slack is specified by the problem formulation, it may be important to understand the precise ways in which a choice of $\maxslack$ affects the final classifier.  We propose the following property: 
\begin{definition}[Slack-consistency]
A procedure for learning fair classifiers with respect to some amount of allowed slack $\maxslack$ and a fixed dataset $\D$ is slack-consistent if, for any individual, the predictions of the learned classifier for this individual are monotonic with respect to $\maxslack$. 
\end{definition}
More precisely, let us denote the procedure for learning fair classifiers as $\opt$; i.e., the result $f_{\maxslack}:=\opt(\dataset, \maxslack)$ of running $\opt$ is a model that takes in features $x$ and group attribute $a$ of some individual and returns a probability $f_{\maxslack}(x,a)\in[0,1]$ of classifying this individual positively.
Then, $\opt$ is slack-consistent if the predictions $f_{\maxslack}(x, a)$ of any individual $(x,a)$ are monotonic with respect to $\maxslack$.  That is, for $\beta_1<\beta_2<\beta_3$, we must have either $f_{\beta_1}(x, a)\le f_{\beta_2}(x, a)\le f_{\beta_3}(x, a)$ or $f_{\beta_1}(x, a)\ge f_{\beta_2}(x, a)\ge f_{\beta_3}(x, a)$.

We argue that slack-consistency is a reasonable, intuitive, and desirable property for machine learning methods.  For example, consider an individual from a disadvantaged group who would be positively labeled with no fairness enforcement (unbounded slack $\maxslack\to\infty$).  Slack-consistency ensures that when fairness is enforced ($\maxslack\to 0$), the same individual should not be assigned a negative prediction (assuming a positive prediction at maximal favoring of the group $\maxslack\to-\infty$).  Otherwise, we would be unfairly disadvantaging the individual in the process of attempting to undo a disadvantage for the individual's group.
The slack-inconsistency of a model in this case may be interpreted as an implementation of a {\em self-fulfilling prophecy} (see~\cite{dwork2012fairness}; ``Catalog of Evils''); i.e., a vendor maliciously chooses the `wrong' members of a protected group to predict positively, ensuring that a future analysis will find that membership in the protected group is associated with less likelihood of positive outcomes.  In the converse setting, an individual from the advantaged group who would be negatively labeled with no fairness enforcement should not be positively labeled when fairness is enforced. 

The rest of the paper is organized as follows.
In Section~\ref{sec:popular_methods} we will investigate the behavior of popular methods for learning fair ML models in terms of slack-consistency, starting with constrained optimization and then focusing on post-processing.  Surprisingly, we will find that these methods fail to satisfy slack-consistency in almost all settings.
Then, in Section~\ref{sec:gabos} we present a simple theoretical method that satisfies slack-consistency given a Bayes-optimal score function and randomized classifiers, although we concede that practical scenarios often do not permit such classifiers.

\section{POPULAR METHODS FAIL TO SATISFY SLACK-CONSISTENCY}\label{sec:popular_methods}
For simplicity, we consider the absolutely bounded bias setting, in which the bias of $f$ is restricted to $|\biasfn(f,\D)|\le\maxslack$ for $\maxslack>0$, although our findings can be generalized.  We will show that the two common methods for learning fair classifiers, {\em constrained optimization} via the method of Lagrange multipliers and {\em post-processing} via exhaustive threshold search, often fail to satisfy slack consistency. 

For our experiments on non-synthetic data, we employ the following datasets:

\begin{itemize}[leftmargin=*,noitemsep,topsep=0pt,parsep=2pt,partopsep=0pt]
\item {\bf Adult} \cite{lichman2013uci} ($48842$
examples). The task is to predict whether the person's income is more than $50$k per year based on census data. We use $2$ protected groups based on gender and preprocess the dataset consistent with previous works, e.g. \cite{zafar2015fairness,goh2016satisfying}.
\item {\bf Communities and Crime} \cite{lichman2013uci} ($1994$ examples). The task is to predict whether a community has high or low crime rate. We preprocessed the data consistent with previous works, e.g. \cite{cotter2018training} and form two protected groups based on whether the community's black population is above the median.
\item {\bf ProPublica’s COMPAS} recidivism data ($7,918$ examples).
The task is to predict recidivism based on 
criminal history, jail and prison time, demographics,
and risk scores. We preprocess this dataset in a similar way as the Adult dataset and use two gender-based  protected groups.
\end{itemize}

\subsection{Constrained Optimization}
In the constrained optimization approach, we have a loss function $\ell(\theta)$ and a fairness constraint $g(\theta) \le 0$ over parameter space $\theta \in \Theta$.
The Lagrangian is $\mathcal{L}(\theta, \lambda) := \ell(\theta) + \lambda \cdot g(\theta)$ where $\lambda \ge 0$ and the goal is to find a solution to $\min_{\theta \in \Theta} \max_{\lambda \ge 0} \mathcal{L}(\theta, \lambda)$. In fairness problems, the loss function is taken to be the usual loss function for a model (e.g. hinge loss) and the fairness constraints (possibly with slack) are typically relaxed so that they are differentiable (e.g. hinge relaxation) and we can alternatively apply SGD to minimize $\mathcal{L}$ in $\theta$ and maximize  $\mathcal{L}$ in $\lambda$ until convergence. This is the approach a number of works adopt \cite{zafar2015fairness,eban2017scalable,goh2016satisfying,cotter2018two}.

In general settings where $\ell$ is the loss of non-convex model such as a neural network,
it may not be surprising that properties such as slack-consistency could fail to hold. In the non-convex setting, as pointed out in recent works such as \cite{agarwal2018reductions,cotter2018two}, a saddle point to the Lagrangian may not even exist and thus models may have nothing to converge to. Moreover, even with convergence, there can be multiple solutions with different accuracy and fairness violations which nonetheless attain the same Lagrangian value. Similar behavior can happen with multiple fairness constraints which are in conflict, such as is known for equalized odds due to feasibility issues \cite{chouldechova2017fair,woodworth2017learning,kleinberg2016inherent}.

In this section, we consider the simplest of cases, where we use a linear model and a single fairness constraint (demographic parity or equal opportunity) on just two protected groups.  In this case, an optimal saddle point to the Lagrangian exists and joint SGD is guaranteed to converge to it~\cite{cotter2018two}. Despite these restrictions, we find that constrained optimization can still fail to satisfy slack-consistency. We illustrate this in Figure~\ref{fig:constrained_optimization} on a number of benchmark fairness datasets: Adult, COMPAS, and Communities and Crime. We train a linear model subject to hinge relaxations of the fairness constraints and jointly train the Lagrangian using the ADAM optimizer under default settings with minibatch size of $100$ for $20$ epochs. We then sort the solutions by the actual fairness violations in training (rather than the violations on the hinge relaxation) to account for the variability between the original and relaxed constraints. 

\begin{figure*}[h]
\begin{center}
\begin{tabular}{lll}
\includegraphics[width=0.32\textwidth]{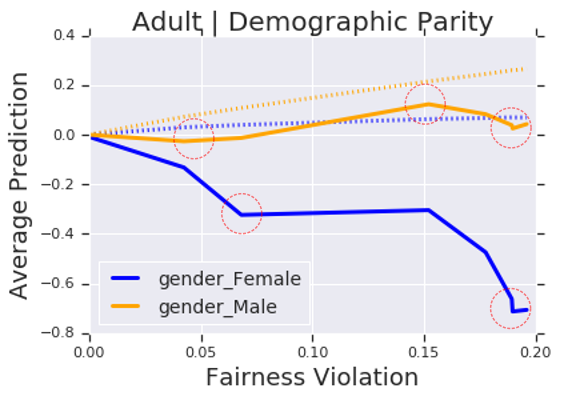}  & 
\includegraphics[width=0.32\textwidth]{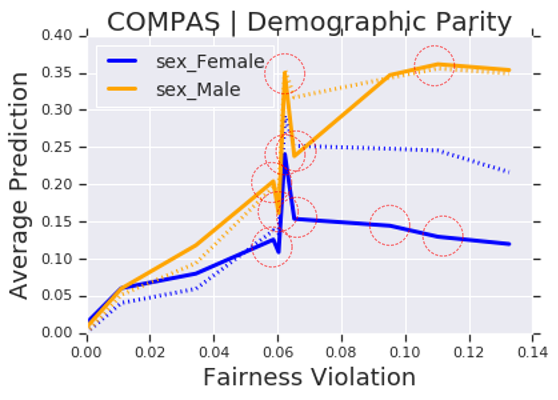}  & 
\includegraphics[width=0.32\textwidth]{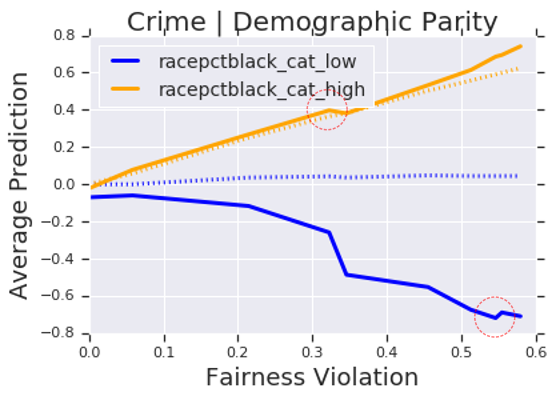} \\ 
\includegraphics[width=0.32\textwidth]{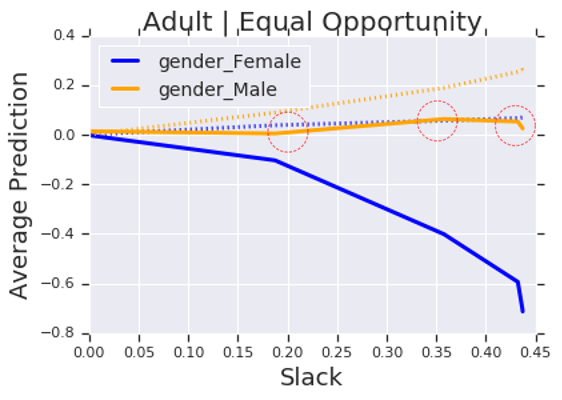} & 
\includegraphics[width=0.32\textwidth]{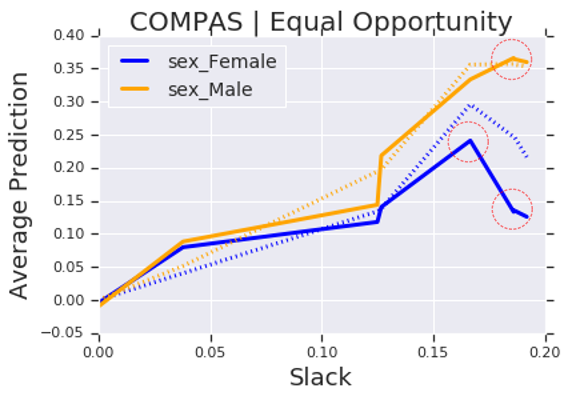}   & 
\includegraphics[width=0.32\textwidth]{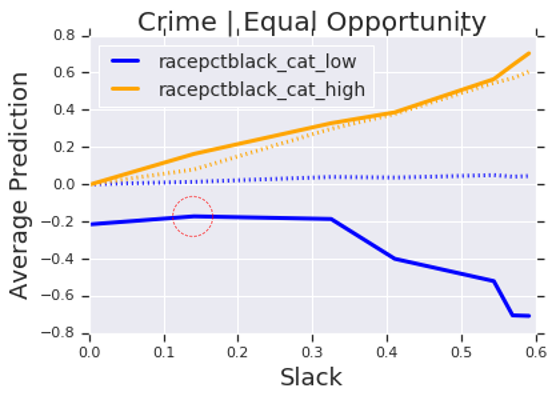} 
\end{tabular}
\end{center}
\caption{{\bf Constrained Optimization. Average model predictions for each protected group vs. slack}: {\bf Top}: Demographic Parity. {\bf Bottom}: Equal Opportunity. For each dataset, we train a linear classifier to satisfy fairness constraints over various slacks. We sort the solutions based on the amount of fairness violation on the training set and then plot the average prediction for each protected group within the training set. We plot both the average soft prediction score (solid lines) as well as the average {\it thresholded} hard binary prediction (dotted lines) for each group.   We see that in most cases, slack-consistency is violated; i.e., the average predictions scores are not monotonic with slack. Points on the curve that violate slack-consistency are circled in {\color{red} red}. It is also worth noting that with the constrained optimization approach, there is the additional counter-intuitive effect where the average thresholded prediction can increase while the average soft prediction decreases (see gender\_Female predictions for Adult) which is possible depending on the distribution of the features. }
  \label{fig:constrained_optimization}
\end{figure*}

Figure~\ref{fig:constrained_optimization} gives us an unsettling realization. It shows that the predictions for each protected group do not satisfy slack-consistency on average, which means that not only are there individuals whose predictions do not satisfy slack-consistency, but also that this property is not even maintained at the group level. 
It is also worth noting that the constrained optimization approach presents another counter-intuitive side-effect (also seen in Figure~\ref{fig:constrained_optimization}), in that the average soft prediction can increase (resp. decrease) while the average thresholded hard prediction decreases (resp. increases). 

Overall, we find that the counter-intuitive side effects associated with slack-inconsistency can easily arise for constrained optimization, even in the simple case of a linear model.

\subsection{Post-Processing}

The post-processing method \cite{hardt2016equality} is perhaps one of the simplest approaches to fair classification. It starts with a score function $\scorefn(x,a)$ which maps individuals to a continuous value and then selects thresholds for each protected group so that the resulting binary classifier from these thresholds has minimum cost while satisfying the fairness constraints.  We provide a pseudocode of a slack-enabled version of post-processing in Algorithm~\ref{alg:post-processing}. Note that we utilize an exhaustive search to find the optimal thresholds. Thus, unlike in the constrained optimization setting, our results are agnostic to any approximations in the optimization.

In general, the post-processing method may require randomized thresholds (equivalent to the quantiles used by~\cite{sacchetto2018proper}).  For our discussions, we will express this through the use of {\em normalized thresholds}.  A normalized threshold $\tau\in[0,1]$ corresponds to a distribution over at most two (adjacent) thresholds which achieves a positive prediction rate of $1 - \tau$.  
Although normalized thresholds are required in general, we note that their use is not ideal.  In practical scenarios, a stochastic classifier can be seen as either capricious (if only one random classification is allowed per individual) or exploitable (if multiple random classifications are allowed).
We will write the loss $\lossfn$ and bias $\biasfn$ as functions of these normalized thresholds. We note that slack-consistency of post-processing is equivalent to monotonicity of the chosen normalized thresholds with respect to $\maxslack$.

\begin{algorithm}[h]
   \caption{Post-processing with tie-breaking.}
\begin{algorithmic}    \label{alg:post-processing}
   \STATE {\bf Inputs}: Allowed slack $\maxslack\in\R_{>0}$, functions $\lossfn,\biasfn$ that take normalized thresholds $\tau_1,\tau_2$ for each group and return the loss and bias, respectively, associated with these thresholds. 
   \STATE
   \STATE \textbullet~ Find all normalized thresholds $\tau_1,\tau_2\in[0,1]$ that satisfy,
   \begin{align*}
       (\tau_1,\tau_2) \in \argmin_{\hat{\tau}_1,\hat{\tau}_2} ~& \lossfn(\hat{\tau}_1, \hat{\tau}_2) 
      \hspace{0.3cm} \text{s.t.}  \hspace{0.3cm} |\biasfn(\hat{\tau}_1,\hat{\tau}_2)| \le \maxslack.
   \end{align*}
   \STATE \textbullet~ Find those solutions $(\tau_1,\tau_2)$ with lowest bias $|\biasfn(\tau_1,\tau_2)|$.  
   \STATE \textbullet~ Of those solutions with lowest bias, find those solutions with lowest threshold for the first group $\tau_1$.
   \STATE \textbullet~ Of those solutions with lowest bias and lowest $\tau_1$, return the solution with lowest threshold for the second group $\tau_2$.
\end{algorithmic}
\end{algorithm}

We present several counterexamples, which show that post-processing does not yield slack-consistency.
We begin by considering the application of post-processing to a score function which is not Bayes-optimal.
This scenario is typical in practice, where the score function is usually some learned function (e.g., a neural network or a decision tree ensemble).
Theorem~\ref{theorem:counterexample_demparity} provides an example of a dataset and such a score function for which post-processing yields slack-inconsistent solutions. 

\begin{theorem}[Slack-inconsistency of post-processing on non-Bayes-optimal score function.]
\label{theorem:counterexample_demparity} There exists a distribution and score function such that the post-processing method fails to satisfy slack-consistency. 
\end{theorem}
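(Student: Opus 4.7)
The plan is to exhibit an explicit counterexample: a small finite distribution $\D$ together with a non-Bayes-optimal score function $\scorefn$, and a choice of slack values $\beta_1 < \beta_2 < \beta_3$ at which the post-processing procedure of Algorithm~\ref{alg:post-processing} returns normalized thresholds that are non-monotonic in at least one group. Because the theorem is an existence statement, constructing a single instance suffices, and the natural setting is a small discrete distribution where one can enumerate candidate thresholds by hand.

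First I would choose a handful of examples per group and deliberately mis-order some scores relative to their labels, so that the per-group loss $\lossfn_a(\tau_a)$, viewed as a function of the normalized threshold, is a piecewise linear function with \emph{multiple} local minima of differing depths, rather than the single V-shape produced by a Bayes-optimal score. Concretely, I would arrange (say) $\lossfn_2(\tau_2)$ to have distinct local minima at some $\tau_2^{\mathrm{low}}$ and $\tau_2^{\mathrm{high}}$, and choose group 1 with an unconstrained optimum at a $\tau_1^*$ that lies between these values. Because $\lossfn$ and $\biasfn$ are both piecewise linear in $(\tau_1,\tau_2)$, the minimizer of $\lossfn$ subject to $|\biasfn(\tau_1,\tau_2)|\le\maxslack$ can only occur at a vertex of the feasibility polygon or along an active constraint boundary, so the set of candidate optima for each $\maxslack$ is finite and can be written down explicitly.

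I would then trace the selected threshold as $\maxslack$ shrinks. The driving mechanism for non-monotonicity is the following: for large $\maxslack$ the procedure can pick the pair $(\tau_1^*,\tau_2^{\mathrm{low}})$ (favoured by the tie-breaking since $|\biasfn|$ is then small); for intermediate $\maxslack$ the slack constraint excludes that pair while still permitting $(\tau_1^*,\tau_2^{\mathrm{high}})$, forcing $\tau_2$ to jump upward; and for very small $\maxslack$ the minimizer is driven back onto or near the diagonal $\tau_1=\tau_2$, pulling $\tau_2$ back down. The resulting sequence $\tau_2^{\mathrm{low}}\to\tau_2^{\mathrm{high}}\to\tau_2^{\mathrm{low}}$ across $\beta_3>\beta_2>\beta_1$ directly contradicts the monotonicity required by slack-consistency. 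To finish, I would read off an individual $(x,a)$ from group 2 whose prediction actually flips, since a change in the group's normalized threshold across a score value present in the dataset toggles that individual's (possibly randomized) prediction.

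The main obstacle is that the deterministic tie-breaking rule in Algorithm~\ref{alg:post-processing} is quite opinionated (lowest $|\biasfn|$, then lowest $\tau_1$, then lowest $\tau_2$) and can, if one is not careful, mask the jump by always selecting a monotonically-varying representative from a tie. I would guard against this by choosing the scores and labels so that at each of $\beta_1,\beta_2,\beta_3$ the claimed optimum is \emph{strictly} lower in loss than the monotone continuation from the neighbouring slack, so the result is robust to how ties are broken. A secondary obstacle is keeping the construction small enough to verify by direct enumeration; five or six points in total, with a linear-scan verification of $\lossfn$ at every breakpoint of the piecewise-linear feasibility region, should suffice.
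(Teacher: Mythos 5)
Your proposal takes essentially the same route as the paper: the paper's proof also exhibits an explicit counterexample in which the non-Bayes-optimal score gives one group a misclassification-vs-threshold curve with two local minima of different depths (local behavior near $\tau_1=0.25$, global minimum at $\tau_1=0.75$), so that as $\maxslack$ grows the chosen threshold first drifts one way and then jumps past its starting point to the far minimum --- exactly your multiple-local-minima jump mechanism, realized there with a continuous distribution whose score is uniform on $[0,1]$ in each group so that the demographic-parity bias is simply the threshold gap and the verification is immediate. Your plan is sound (and your attention to the tie-breaking rule is a detail the paper glosses over); completing it just requires writing down and enumerating the explicit instance, which the paper's construction confirms is achievable.
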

\begin{proof}
Consider the distribution partitioned into two protected groups, $A = 1$ and $A = 2$ each occurring with equal proportion and let our score function be $R$ and suppose that we are in the binary classification setting with the goal of demographic parity. 
Let the distribution for $A = 1$ be as follows:
\begin{itemize}[leftmargin=*,noitemsep,topsep=0pt,parsep=2pt,partopsep=0pt]
    \item $\frac{1}{2}$ of the points have $R$ uniformly distributed in $[0, 0.5]$ and label $y = 1$ with probability $0.6$ 
    and label $y=0$ otherwise,
    \item $\frac{1}{4}$ of the points have $R$ uniformly distributed in $[0.5, 0.75]$ and label $y = 0$, 
    \item $\frac{1}{4}$ of the points have $R$ uniformly distributed in $[0.75, 1]$ and label $y = 1$.
\end{itemize}
Let the distribution of $A=2$ be as follows:
\begin{itemize}[leftmargin=*,noitemsep,topsep=0pt,parsep=2pt,partopsep=0pt]
    \item $\frac{1}{4}$ of the points have $R$ uniformly distributed in $[0, 0.25]$ and label $y = 0$,
    \item $\frac{1}{4}$ of the points have $R$ uniformly distributed in $[0.25, 0.5]$ and label $y = 1$, 
    \item $\frac{1}{4}$ of the points have $R$ uniformly distributed in $[0.5, 0.75]$ and label $y = 1$, 
    \item $\frac{1}{4}$ of the points have $R$ uniformly distributed in $[0.75, 1]$ and label $y = 0$.
\end{itemize}

We plot the misclassification rate with respect to chosen threshold for each group in Figure~\ref{fig:discrete_equal_opp_post_processing} (left).
Note that at any threshold $\tau$, the classifier $\1[R(x)\ge\tau]$ achieves positive prediction rate of $1-\tau$ on either group.  For strict fairness constraints ($\maxslack\to0$), the ideal thresholds are thus $\tau_1=\tau_2=0.25$.  If we choose to increase the allowed slack by some small amount to $\maxslack=\epsilon>0$, the ideal threshold for the first group will decrease, since the misclassification error has a positive derivative for the first group at $\tau_1=0.25$.  However, for a large enough slack, the ideal threshold for the first group will be at the global minimum, $\tau_1=0.75$.  Therefore, post-processing applied to this example yields slack-inconsistent solutions.
\end{proof}

In the previous theorem's counter-example, the score-function was not Bayes-optimal. We next consider a scenario for which we have a Bayes-optimal classifier but are not allowed to employ stochastic classifiers.
Indeed, stochastic classifiers are often undesirable in practice, since they can be seen as capricious (why should a random number determine my loan eligibility?) or exploitable (if I get denied a loan on my first try, I will apply again until I get accepted). 
We show that post-processing in this scenario fails to satisfy slack-consistency.
\begin{theorem}[Slack-inconsistency of post-processing on Bayes-optimal score function with deterministic thresholds.]\label{theorem:counterexample_bayesoptimal}
There exists a distribution where the score function is Bayes-optimal for each protected group and the post-processing method fails to satisfy slack-consistency.
\end{theorem}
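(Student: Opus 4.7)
The plan is to exhibit an explicit finite, discrete joint distribution over $(X, A, Y)$ with two protected groups such that the score function $R(x,a) := \Pr(Y{=}1 \mid X{=}x, A{=}a)$ is Bayes-optimal by construction, and yet the deterministic-threshold post-processing procedure in Algorithm~\ref{alg:post-processing} is slack-inconsistent. I will work with demographic parity for concreteness; the construction extends to equal opportunity with minor adjustments to the label probabilities. The essential mechanism is that, without the ability to randomize, the positive prediction rate of each group is a piecewise-constant step function of the threshold, so $\biasfn$ and $\lossfn$ both take only finitely many values, and the solution to the constrained minimization can jump discontinuously as $\maxslack$ changes.

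First I would place a small number of distinct score atoms in each group: for $A{=}1$, atoms at values $r_1 < r_2 < r_3$ with prescribed masses and labels, and similarly a handful of atoms for $A{=}2$ at values $s_1 < s_2 < \cdots$, chosen so that each atom's conditional label distribution matches its assigned Bayes score. Under this setup, the admissible deterministic thresholds for each group correspond to cuts between consecutive atoms, yielding a finite grid of candidate pairs $(\tau_1,\tau_2)$. I would then compute the positive prediction rate and misclassification loss at each grid point by inspection, obtaining a finite table of $(\biasfn, \lossfn)$ values.

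Next I would choose the numerical masses so that the table contains three grid points $P_0, P_1, P_2$ whose absolute biases $|\biasfn(P_i)|$ are strictly increasing and whose losses are strictly decreasing, and such that the $\tau_1$-coordinate is non-monotone along the sequence $P_0 \to P_1 \to P_2$ (for example, strictly larger at $P_1$ than at both $P_0$ and $P_2$). As $\maxslack$ is raised through the bias values of $P_1$ and $P_2$, the $\argmin$ in Algorithm~\ref{alg:post-processing}---after the tie-breaking steps, which I can render vacuous by small asymmetric perturbations of the masses so that all biases and losses are distinct---sweeps through $P_0, P_1, P_2$ in order. Any individual in group $1$ whose Bayes score lies strictly between the two values that $\tau_1$ visits on this sweep then switches predictions non-monotonically as $\maxslack$ varies, which is precisely a violation of slack-consistency.

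The main obstacle is the calibration step: one must pick masses and label probabilities so that simultaneously (i) the Bayes-optimality constraint $R(x,a) = \Pr(Y{=}1 \mid x, a)$ holds self-consistently at every atom, (ii) the three candidate pairs become feasible in the prescribed order as $\maxslack$ grows, and (iii) among all feasible pairs at each slack level, the one selected by Algorithm~\ref{alg:post-processing} is in fact the intended $P_i$, with no unanticipated grid point attaining smaller loss. Once such an assignment is found, the remainder of the proof reduces to verifying a finite collection of numerical inequalities at the grid points, directly analogous in spirit to the explicit computation used in the proof of Theorem~\ref{theorem:counterexample_demparity}.
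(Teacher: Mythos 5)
Your proposal takes essentially the same route as the paper: the paper's proof of this theorem is itself exactly such an explicit finite discrete example (four score atoms per group at values $\frac{1}{4},\frac{1}{2},\frac{3}{4},1$, with the score at each atom defined as its conditional positive-label probability --- label probabilities $0.15, 0.2, 0.525, 0.95$ for group one and $0.1, 0.2, 0.35, 0.45$ for group two, under an equal-opportunity constraint), verified by exhaustive search over the $25$ deterministic threshold pairs, whose optimal-threshold path is non-monotone in the slack (detailed in the appendix, Figure~\ref{fig:discrete_equal_opp_post_processing_all}) --- precisely your finite-grid, discontinuous-$\argmin$ mechanism. The only substantive difference is that you defer the numerical calibration (your conditions (i)--(iii)) while the paper exhibits the concrete instance; since this is an existence theorem proved by example, that instantiation is the entire content of the proof, so to complete your argument you would need to actually produce the masses and label probabilities --- which is achievable, as the paper's numbers show for equal opportunity (and note your condition (i) is automatic, not an obstacle, once each atom's score is simply declared to equal its empirical label frequency).
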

\begin{proof}
See Figure~\ref{fig:discrete_equal_opp_post_processing}, right three images.
\end{proof}

\begin{figure*}[h]
\begin{center}
\begin{tabular}{llll}
\includegraphics[width=0.23\textwidth]{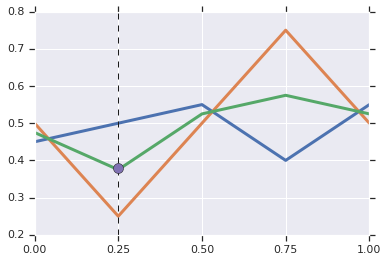} &
\includegraphics[width=0.23\textwidth]{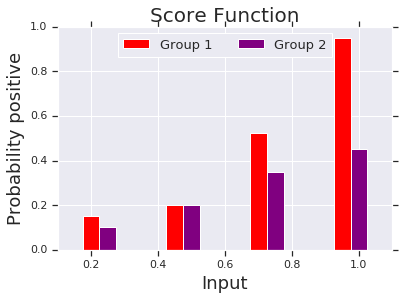}  & 
\includegraphics[width=0.23\textwidth]{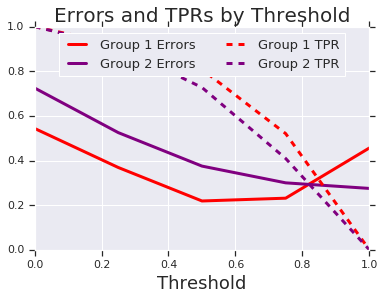}  & 
\includegraphics[width=0.23\textwidth]{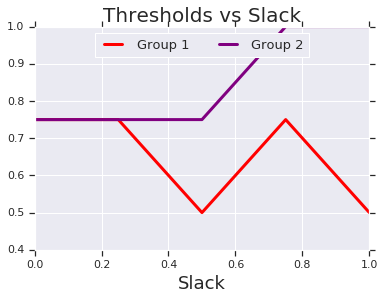} 
\end{tabular}
\end{center}
\caption{{\bf Left}: Figure for Theorem~\ref{theorem:counterexample_bayesoptimal}.
The plot shows cost (y-axis) with respect to chosen threshold (x-axis) for the first group (blue), the second group (orange), and their average (green). {\bf Right three images}: We show an example where the post-processing method has access to the Bayes-optimal score function but fails to satisfy slack-consistency under the equal opportunity fairness constraint. {\bf Middle-Left}: The data for each group takes on $4$ discrete values: $\frac{1}{4}, \frac{1}{2}, \frac{3}{4}, 1$, and plotted are the probabilities that the point is positively labeled for each group. For the first group, they are  $0.15, 0.2,  0.525,  0.95$, and for the second group, they are $0.1, 0.2, 0.35, 0.45$. {\bf Middle-Right}: Shown are the classification errors and true positive rates depending on where we set the thresholds for each group. {\bf Right}: The optimal thresholds found by the post-processing methods vs slack. These can be obtained over an exhaustive search over the $25$ possible classifiers obtained from all pairs of choices for thresholds. See the Appendix for a much more detailed display of the calculations to arrive at these conclusions.} 
  \label{fig:discrete_equal_opp_post_processing}
\end{figure*}

\begin{figure*}[h]
\begin{center}
\begin{tabular}{lll}
\includegraphics[width=0.32\textwidth]{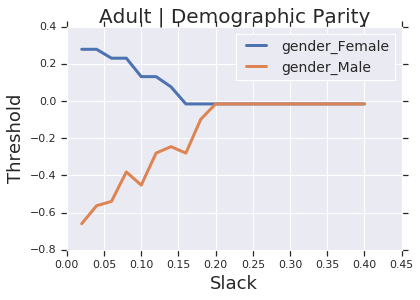}  & 
\includegraphics[width=0.32\textwidth]{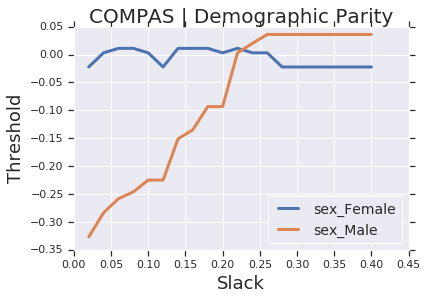}  & 
\includegraphics[width=0.32\textwidth]{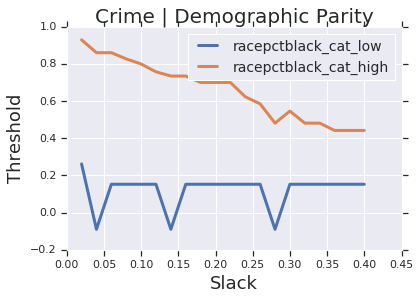} \\ 
\includegraphics[width=0.32\textwidth]{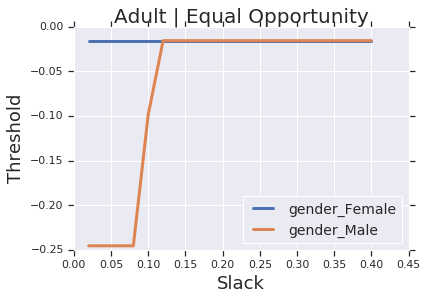} & 
\includegraphics[width=0.32\textwidth]{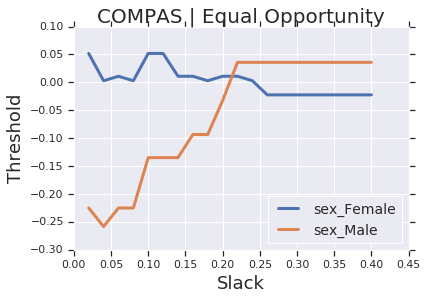}   & 
\includegraphics[width=0.32\textwidth]{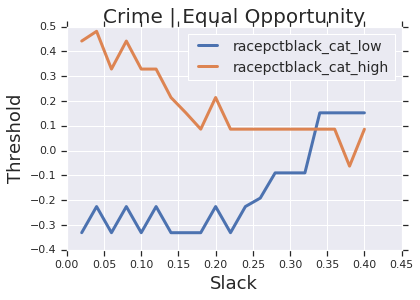} 
\end{tabular}
\end{center}
\caption{{\bf Post-Processing. Chosen thresholds for each protected group vs slack}: For each dataset, we run logistic regression to learn a score function and then apply Algorithm~\ref{alg:post-processing} across a range of slacks, plotting the chosen thresholds for each protected group. The top row shows the results for demographic parity and the bottom row shows the results for equal opportunity. We see that in most of the cases, the resulting thresholds do not satisfy slack-consistency.}
  \label{fig:postprocessing}
\end{figure*}

We conclude our counterexamples for post-processing with Figure~\ref{fig:postprocessing}, which shows that on real datasets, the post-processing method fails to be slack-consistent.

\section{POST-PROCESSING WITH GABOS FUNCTIONS}\label{sec:gabos}
While the previous section showed that post-processing yields slack-inconsistent solutions when one lacks either a Bayes-optimal classifier or access to stochastic thresholds, in this section we show that post-processing is guaranteed to be slack-consistent given these two assumptions.

We begin by defining a Bayes-optimal score function.  Notably, the function must be Bayes-optimal with respect to both features and group membership attribute (rather than with respect to only the features of the individual, which is how ML models are typically trained).
\begin{definition}[Group-aware Bayes-optimal score (GABOS) functions]
We say that a score function $\scorefn$ is group-aware Bayes-optimal with respect to $\D$ if its output $\scorefn(x,a)$ is the empirical probability of individual $(x,a)\in\D$ being labelled positively; i.e., $\scorefn(x,a)=P(y=1 ~|~ x,a,\D)$.
\end{definition}

\begin{theorem}[Consistency of post-processing on GABOS functions] 
Suppose $\scorefn$ is a GABOS function with respect to $\D$, and that the loss $\lossfn$ measures mis-classification error of a thresholding of $\scorefn$ with respect to $\D$.  Furthermore, suppose the bias $\biasfn$ measures demographic parity or equal opportunity.  Then, the post-processing method (Algorithm~\ref{alg:post-processing}) applied to $\scorefn$ yields slack-consistent solutions.
\end{theorem}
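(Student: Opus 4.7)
My strategy is to reduce the problem to a one-dimensional parametric convex optimization where monotonicity of the optimizer in the parameter is standard. The first key observation is that under GABOS the optimization decomposes cleanly across the two groups: the empirical misclassification loss has the form $\lossfn(\tau_1,\tau_2) = w_1 L_1(\tau_1) + w_2 L_2(\tau_2)$, where $w_a$ is the group-$a$ proportion and $L_a$ depends only on the threshold of group $a$; and both demographic parity and equal opportunity give rise to a bias of the form $\biasfn(\tau_1,\tau_2) = F_1(\tau_1) - F_2(\tau_2)$, where $F_a$ is group $a$'s positive-prediction rate (DP) or true-positive rate (EO).

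Next I would establish two structural facts. Let $Q_a(\tau_a)$ denote the score at the $\tau_a$-quantile of the group-$a$ score distribution. A direct calculation under GABOS with normalized (possibly stochastic) thresholds gives $L_a'(\tau_a) = 2Q_a(\tau_a) - 1$: flipping the individual at the $\tau_a$-quantile from positive to negative changes its expected loss from $1-Q_a(\tau_a)$ to $Q_a(\tau_a)$. Since $Q_a$ is non-decreasing, each $L_a$ is convex. Similarly $F_a$ is monotonically non-increasing: $F_a(\tau_a) = 1-\tau_a$ for DP, and $F_a'(\tau_a) = -Q_a(\tau_a)/M_a$ for EO (where $M_a$ is the total group-$a$ positive mass). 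To treat both cases uniformly I reparameterize by $u_a := F_a(\tau_a)$, so the constraint becomes $|u_1 - u_2| \le \maxslack$. For DP convexity in $u_a$ is immediate (just negation); for EO a short chain-rule computation yields
\begin{equation*}
\frac{dL_a}{du_a} \;=\; \frac{L_a'(\tau_a)}{F_a'(\tau_a)} \;=\; \frac{M_a}{Q_a(\tau_a)} - 2M_a,
\end{equation*}
which is non-decreasing in $u_a$ because increasing $u_a$ decreases $\tau_a$ and hence $Q_a(\tau_a)$; so $L_a$ remains convex in the reparameterized variable.

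Both cases now reduce to $\min_{|u_1-u_2|\le\maxslack} w_1 L_1(u_1) + w_2 L_2(u_2)$ with convex summands. Let $u_a^\star$ denote the unconstrained minimizers. If $|u_1^\star - u_2^\star|\le\maxslack$ the optimizer is simply $(u_1^\star,u_2^\star)$; otherwise, WLOG $u_1^\star > u_2^\star$, the binding constraint is $u_1 - u_2 = \maxslack$, and implicit differentiation of the reduced first-order condition $w_1 L_1'(u_1) + w_2 L_2'(u_1-\maxslack) = 0$ gives $du_1/d\maxslack = w_2 L_2''/(w_1 L_1'' + w_2 L_2'') \in [0,1]$, so $u_1$ is non-decreasing and $u_2 = u_1-\maxslack$ is non-increasing in $\maxslack$. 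Since $\tau_a = F_a^{-1}(u_a)$ is a monotonic transform, $\tau_a(\maxslack)$ is monotonic in $\maxslack$; and because the prediction for any fixed individual $(x,a)$ depends on $\tau_a$ only through the comparison of $\scorefn(x,a)$ with $Q_a(\tau_a)$, which moves monotonically with $\tau_a$, the per-individual prediction inherits the monotonicity, yielding slack-consistency.

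The main obstacle is the case where the score distribution has atoms, so $L_a$ is piecewise-linear rather than strictly convex and the set of optimizers at each $\maxslack$ may be an interval rather than a point. I would handle this by showing that the set-valued optimizer $\maxslack \mapsto U(\maxslack) \subset [0,1]^2$ is monotone in the strong-set order (a consequence of the supermodular structure induced by separable convex $L_1,L_2$ and the constraint $|u_1-u_2|\le\maxslack$), and then verifying that Algorithm~\ref{alg:post-processing}'s lexicographic tie-breaking (smallest $|\biasfn|$, then smallest $\tau_1$, then smallest $\tau_2$) selects an element from $U(\maxslack)$ that respects this monotonicity, thereby upgrading set-valued monotonicity to the pointwise slack-consistency in the theorem statement.
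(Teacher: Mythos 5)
Your proposal follows the same skeleton as the paper's proof: decompose loss and bias per group, reparameterize from thresholds to the group rates ($u_a$ in your notation, $b_i$ in the paper's), establish convexity of the per-group loss in the reparameterized variable, and then argue monotonicity of the constrained minimizer in $\maxslack$. Your convexity computations ($L_a'(\tau_a)=2Q_a(\tau_a)-1$ and the chain-rule derivative $M_a/Q_a-2M_a$ for equal opportunity) are correct and are a nice direct-differentiation alternative to the paper's appendix lemma, which derives the same convexity from concavity of the ROC of a Bayes-optimal score. Your smooth-case implicit differentiation, $du_1/d\maxslack = w_2L_2''/(w_1L_1''+w_2L_2'')\in[0,1]$, is also correct as far as it goes.

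The genuine gap is the part you defer. For a GABOS function on a finite dataset $\D$, the score takes finitely many values, so each $L_a$ is piecewise linear, second derivatives do not exist, first-order conditions hold only as subdifferential inclusions, and minimizers are generically set-valued; the non-smooth case is therefore not an edge case but \emph{the} case the theorem is about, and your core argument does not cover it. Moreover, the specific repair you propose does not work off the shelf: the feasible set $\{(u_1,u_2) : |u_1-u_2|\le\maxslack\}$ is not a sublattice of $[0,1]^2$ even after reversing the order on $u_2$ (with $v=-u_2$ the constraint reads $|u_1+v|\le\maxslack$, and the join of the feasible points $(\maxslack,0)$ and $(0,\maxslack)$ is $(\maxslack,\maxslack)$, which is infeasible), so Topkis/Veinott strong-set-order monotonicity of the solution correspondence cannot be invoked directly from ``the supermodular structure''; you would need to first reduce to the one-dimensional binding problem $\min_{u_1} L_1(u_1)+L_2(u_1-\maxslack)$ and run a non-smooth monotone-comparative-statics argument there, plus separately establish that the orientation $u_1>u_2$ cannot flip as $\maxslack$ varies, and that the algorithm's lexicographic tie-breaking selects monotonically from the optimal set. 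This is exactly the content of the paper's proof, which works throughout with left and right subdifferentials $\deriv\lossfn_i^{-},\deriv\lossfn_i^{+}$ and proves three claims by contradiction (orientation preservation, $b_1^q\ge b_1^p$, $b_2^q\le b_2^p$), constructing in each case a feasible point of strictly lower loss when monotonicity fails. So your proposal is correct in spirit and in its smooth-case calculations, but the step you label an ``obstacle'' is the actual theorem, and the sketched route to it contains a false intermediate claim as stated.
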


\begin{figure}[h]
\centering
\includegraphics[width=0.66\columnwidth]{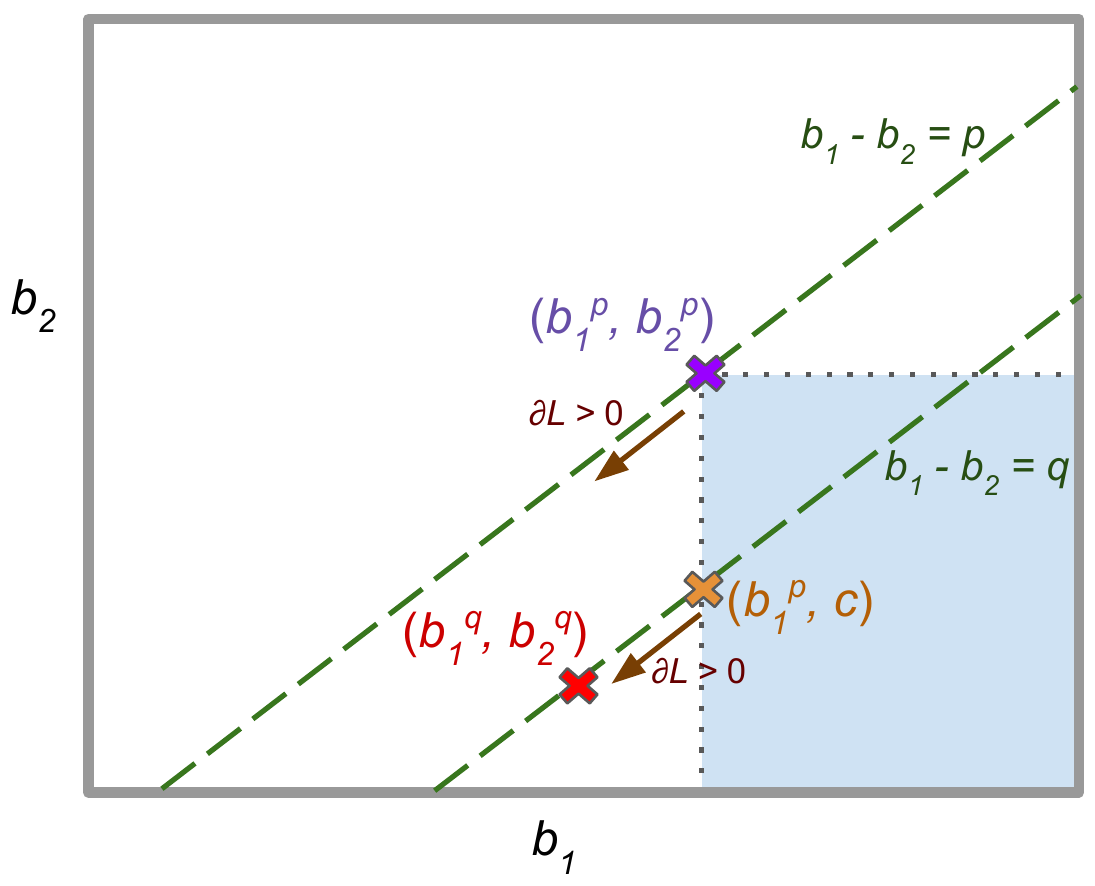}
\caption{A pictorial presentation of the proof of Claim 2 in Theorem 3.  To prove slack-consistency, we wish to show that $(b_1^q,b_2^q)$ lies within the shaded rectangle to the right and bottom of $(b_1^p,b_2^p)$.  Our proof is by way of contradiction.  If our claim does not hold, then we are able to show that $(b_1^p,c)$ is a solution with lower loss but same bias as $(b_1^q,b_2^q)$.}
  \label{fig:consistency-proof}
\end{figure}

\begin{proof}
The setting of the theorem allows us to split $\lossfn,\biasfn$ into functions of the two separate groups:
\begin{align}
    \lossfn(\tau_1,\tau_2) &= \lossfn_1(\tau_1) + \lossfn_2(\tau_2), \\ \biasfn(\tau_1,\tau_2) &= \biasfn_1(\tau_1) - \biasfn_2(\tau_2).
\end{align}
We note that in the considered setting, the functions $\lossfn_1,\lossfn_2$ are convex with respect to $\tau_1,\tau_2$ and the functions $\biasfn_1,\biasfn_2$ are monotonically decreasing.
We will find it useful to consider these functions in terms of the biases $b_1=\biasfn_1(\tau_1),b_2=\biasfn_2(\tau_2)$ induced by $\tau_1,\tau_2$.  That is, we express the loss and bias functions as,
\begin{align}
    \lossfn(b_1,b_2) &= \lossfn_1(b_1) + \lossfn_2(b_2), \\
    \biasfn(b_1,b_2) &= b_1 - b_2,
\end{align}
where we compute $\lossfn_i(b_i)$ as $\min_{\biasfn_i(\tau_i) = b_i}\lossfn_i(\tau_i)$.  We note that in the considered setting, the range of $b_1,b_2$ is $[0,1]$.  Furthermore, the functions $\lossfn_1,\lossfn_2$ maintain their convexity with respect to $b_1,b_2$ in the case of demographic parity and equal opportunity with a Bayes-optimal score function (see the appendix for a short proof).
We denote the left and right subdifferentials of $\lossfn_i$ by, 
\begin{align}
    \deriv \lossfn_i^{-} (b) &= \inf \deriv\lossfn_i(b), \\
    \deriv \lossfn_i^{+} (b) &= \sup \deriv\lossfn_i(b).
\end{align}

Consider a solution $\tau_1^p,\tau_2^p$ returned by post-processing with $\maxslack=p>0$.  Let $b_1^p = \biasfn(\tau_1^p),b_2^p=\biasfn(\tau_2^p)$. Without loss of generality, we may assume $b_1^p - b_2^p = p$; i.e., $b_1^p>b_2^p$.
By the optimality of $b_1^p,b_2^p$ we have,
\begin{align}
    \deriv\lossfn_1^{-}(b_1^p) &< 0, \hspace{0.2cm}
    \deriv\lossfn_2^{+}(b_2^p) \ge 0.
\end{align}
Otherwise, we would be able to achieve lower bias and lower loss simultaneously.  By the same logic we have,
\begin{align}
    \hspace{-0.3cm} \deriv\lossfn_1^{+}(b_1^p) + \deriv\lossfn_2^{+}(b_2^p) &\ge 0,~ \hspace{0.1cm}
    \deriv\lossfn_1^{-}(b_1^p) + \deriv\lossfn_2^{-}(b_2^p) < 0.
    \hspace{-0.1cm}
    \label{eq:grad-ineq}
\end{align}

Now consider an analogous solution $\tau_1^q,\tau_2^q$ for $\maxslack=q$ such that $q>p$ with bias values $|b_1^q-b_2^q| = q$.  We will show that $b_1^q \ge b_1^p$ and $b_2^q \le b_2^p$ through a sequence of three claims:

\textbf{Claim 1:} $b_1^q > b_2^q$.  \textit{Proof:}  Suppose otherwise; i.e., $b_2^q - b_1^q = q > 0$.  Then we must have at least one of $b_1^q<b_1^p$ or $b_2^q > b_2^p$ (otherwise we contradict $b_1^p>b_2^p$).  Suppose, first, that $b_1^q<b_1^p$.  By the convexity of $\lossfn_1$, we have,
\begin{equation}
    \deriv\lossfn_1^{+}(b_1^q) \le \deriv\lossfn_1^{-}(b_1^p) < 0.
\end{equation}
This means that we may increase $b_1^q$ to simultaneously lower the loss and bias; contradiction.  The same logic for the case of $b_2^q>b_2^p$ leads to an analogous contradiction.

\textbf{Claim 2:} $b_1^q \ge b_1^p$.  \textit{Proof:}  Suppose otherwise; i.e., $b_1^q < b_1^p$.  Let $c=\max\{b_2^q, b_1^p - q\}$ (see Figure~\ref{fig:consistency-proof} for a pictoral presentation).  Since $\lossfn$ is convex we have,
\begin{equation}
    \label{eq:term0}
    \lossfn(b_1^q,b_2^q) \ge \lossfn(b_1^p, c) + (b_1^q - b_1^p) \deriv\lossfn_1^{-}(b_1^p) + (b_2^q - c) \deriv\lossfn_2^{-}(c).
\end{equation}
Note that $b_2^q < b_1^q < b_1^p$ implies that $c = \max\{b_2^q, b_1^p - q\} < b_1^p$.  Combining this with the fact that $\deriv\lossfn_1^{-}(b_1^p) < 0$ we have,
\begin{equation}
    \label{eq:term1}
    (b_1^q - b_1^p) \deriv\lossfn_1^{-}(b_1^p) \ge (b_1^q - c) \deriv\lossfn_1^{-}(b_1^p).
\end{equation}
Furthermore we have,
\begin{equation}
    b_2^q = b_1^q - q < b_1^q - p = b_1^q - b_1^p + b_2^p < b_2^p,
\end{equation}
and,
\begin{equation}
    b_1^p - q < b_1^p - p = b_2^p,
\end{equation}
implying that $c = \max\{b_2^q, b_1^p - q\} < b_2^p$. Thus by the convexity of $\lossfn_2$, we have $\deriv\lossfn_2^{-}(c) \le \deriv\lossfn_2^{-}(b_2^p)$.  Recalling that $b_2^q -c \le 0$, we have,
\begin{equation}
    \label{eq:term2}
    (b_2^q - c) \deriv\lossfn_2^{-}(c) \ge (b_2^q - c) \deriv\lossfn_2^{-}(b_2^p).
\end{equation}
Combining equations~\ref{eq:term0}, \ref{eq:term1}, \ref{eq:term2} we have,
\begin{equation}
    \lossfn(b_1^q,b_2^q) \ge \lossfn(b_1^p, c) + (b_1^q - c) \deriv\lossfn_1^{-}(b_1^p) + (b_2^q - c) \deriv\lossfn_2^{-}(b_2^p).
\end{equation}
In conjunction with Equation~\ref{eq:grad-ineq}, this means that $(b_1^p, c)$ is a feasible solution for $\maxslack=q$ with lower loss; contradiction.

\textbf{Claim 3:} $b_2^q \le b_2^p$.  \textit{Proof:}  Analogous to the Claim 2.

These claims show that the biases of the optimal solution are monotonic in $\maxslack$.  Accordingly, the thresholds are monotonic as well, which implies that the solutions are slack-consistent, as desired.
\end{proof}

Given the previous theoretical result, we propose to learn a fair ML classifier by first learning a GABOS function and then applying post-processing.  This procedure is summarized in Algorithm~\ref{alg:gabos}.
Learning a suitable GABOS function may be performed in an unsupervised manner (e.g., using clustering) or in a supervised manner (e.g., using decision trees to minimize loss).  
We have the following result, which ensures that Algorithm~\ref{alg:gabos} yields slack-consistent solutions:

\begin{algorithm}[t]
   \caption{GABOS learning with post-processing.}
\begin{algorithmic}\label{alg:gabos}
   \STATE {\bf Inputs}: Dataset $\dataset=\{(x_i,a_i,y_i)\}_{i=1}^N$. Allowed slack $\maxslack\in\R_{>0}$.
   \STATE \textbullet~ Separate the dataset by group $\dataset_j = \{(x_i,a_i,y_i)~|~ a_i = j\}$ for $j=1,2$.  
   \STATE \textbullet~ For each $\dataset_j$, devise a finite partition $c_j$ of the feature space; e.g., $c_j(x) = m\in\{1,\dots,M\}$.  This can be done in an unsupervised fashion (e.g., clustering) or with supervision (e.g., a decision tree with respect to loss). Each partition must contain at least one member of $\D$.
   \STATE \textbullet~ For each group $j\in[1,2]$ and each cluster $m\in[1,M]$, compute the Bayes-optimal value for examples mapped to $m$; i.e., compute functions $f_j(m) := \frac{|\{i ~|~ y_i=1, x_i\in\D_j, c_j(x_i)=m\}|}{|\{i ~|~ x_i\in\D_j, c_j(x_i)=m\}|}$.
   \STATE \textbullet~ Run post-processing (Algorithm~\ref{alg:post-processing}) on the score function $\scorefn(x) = f_{\A(x)}(c_{A(x)}(x))$ with slack $\maxslack$.
   \STATE \textbullet~ Return $\PP(\scorefn(\cdot))$, where $\PP$ is the result of the optimal thresholding found by post-processing.
\end{algorithmic}
\end{algorithm}

\begin{theorem}[Consistency of Algorithm~\ref{alg:gabos}] 
Performing GABOS learning with post-processing yields slack-consistent solutions.
\end{theorem}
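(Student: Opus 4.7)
The plan is to reduce this theorem directly to Theorem 3 by recognizing that the score function $\scorefn$ constructed by Algorithm~\ref{alg:gabos} is a GABOS function when considered over an appropriate ``coarsened'' dataset whose effective features are the partition cells rather than the raw $x$'s.

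First, I would form the coarsened dataset $\tilde{\D} = \{(c_{a_i}(x_i), a_i, y_i)\}_{i=1}^N$ obtained by replacing each feature vector with the index of its group-specific partition cell. By the definition of $f_j(m)$ in Algorithm~\ref{alg:gabos}, the score $\scorefn(x) = f_{\A(x)}(c_{\A(x)}(x))$ is precisely the empirical probability of $y=1$ conditioned on the coarsened feature $(c_a(x), a)$ in $\tilde{\D}$. Hence $\scorefn$ is a GABOS function with respect to $\tilde{\D}$ in the sense of the definition stated just before Theorem 3.

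Next, I would argue that running the post-processing procedure (Algorithm~\ref{alg:post-processing}) on $\scorefn$ over $\D$ produces exactly the same optimization problem as running it on $\scorefn$ over $\tilde{\D}$. This follows because both the misclassification loss $\lossfn$ and the bias functionals $\biasfn_{\mathrm{DemPar}}$ and $\biasfn_{\mathrm{EqOpp}}$ depend on the examples only through the joint empirical distribution of $(\scorefn, a, y)$, and that distribution is identical under the two views: every example in a given cell shares the same score, so the coarsening preserves all relevant statistics. Theorem 3 therefore applies to $\scorefn$ on $\tilde{\D}$, yielding thresholds $\tau_1(\maxslack)$ and $\tau_2(\maxslack)$ that are monotonic in $\maxslack$.

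Finally, to translate threshold-monotonicity into individual-level slack-consistency, I would note that for any fixed individual $(x, a)$ the score $\scorefn(x, a)$ is constant in $\maxslack$, while the post-processed prediction $\PP(\scorefn(x, a))$, read off from a normalized threshold $\tau_a$, is a monotonically non-increasing function of $\tau_a$. Composing this with the monotonicity of $\tau_a(\maxslack)$ gives monotonicity of the individual's prediction in $\maxslack$, which is exactly slack-consistency.

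The main subtlety is checking that the coarsening preserves the hypotheses that Theorem 3 actually uses --- in particular the convexity of the per-group loss $\lossfn_j$ as a function of the induced bias $b_j$. Since on $\tilde{\D}$ the score $\scorefn$ is GABOS by construction, this convexity is inherited verbatim from the appendix argument cited in the proof of Theorem 3, so no additional work is required. I do not expect any further obstacles; the argument is essentially a ``lift'' of Theorem 3 through the cell-constant score function.
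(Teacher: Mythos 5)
Your proposal is correct and follows essentially the same route as the paper's own proof: reduce to Theorem~3 by viewing $\scorefn$ as a GABOS function on the coarsened dataset $\widetilde{\D}$, then observe that any individual maps to a fixed partition cell independent of $\maxslack$, so monotonicity of the group thresholds transfers to the individual's prediction. Your intermediate step---that the loss and bias depend on examples only through the joint empirical distribution of $(\scorefn,a,y)$, which the coarsening preserves---merely makes explicit what the paper compresses into the word ``essentially,'' and is a welcome clarification rather than a different argument.
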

\vspace{-0.2in}
\begin{proof}
Algorithm~\ref{alg:gabos} essentially performs post-processing with respect to a GABOS function on a simplified dataset $\widetilde{\D} = \{(c_{a_i}(x_i), a_i, y_i)~|~ (x_i,a_i,y_i)\in \D \}$.  This yields slack-consistent solutions with respect to the reduced features $c_{a_i}(x_i)$.  Now consider some arbitrary $x$ (not necessarily in the original training set).  This individual will be mapped to a single partition $c_{\A(x)}(x)$ regardless of slack.  Therefore, its predictions $\PP(f_{\A(x)}(c_{\A(x)}(x)))$ will be monotonic with slack.
\end{proof}

Although we possess a solution to slack-consistency, we stress that this solution is not ideal for many practical scenarios for two reasons.  First, the use of stochastic thresholds is undesirable, since, as mentioned before, they may be seen as either capricious or exploitable.  Second, access to the group membership of an individual is often not available during inference (e.g., in web applications).
\section{DISCUSSION}
\vspace{-0.2cm}
Our work suggests that there is much to explore to make ML fairness methods more transparent. 
Conventional wisdom in machine learning suggests that lack of transparency and explainability arises from the use of complicated models.
However, our work shows that in the context of ML fairness, even with the simplest underlying models and the most straightforward training procedures, introducing a single fairness constraint can have significant consequences on the understandability of the model.
We may compare and contrast this to the phenomenon of adversarial examples in neural networks~\cite{goodfellow2014explaining}.
In the adversarial example setting, the complexity of the model leads to drastically unintended behavior. In our setting, it is the introduction of simple group-parity constraints which leads to counter-intuitive behavior.
We encourage researchers and practitioners to be wary of such complexity that may be introduced through seemingly simple augmentations to their models or training procedures. 

Our findings also uncover a stark disconnect between group-based fairness metrics and intuitive notions of fairness.
Previous works have noted the disconnects between group-parity and individual notions of fairness~\cite{dwork2012fairness} as well as between group-fair classifiers and future impact of decisions of those classifiers~\cite{liu2018delayed}.
Our work provides further evidence of this disconnect through the notion of slack-consistency.  Notably, our counter-examples show that standard methods for ML fairness violate slack-consistency for both individuals and groups as a whole on average, even when these individuals and groups come from the same data that the model was trained on.  Thus, we encourage researchers to re-assess the utility of using group-parity as a proxy for fairness.  

To conclude, we re-affirm that slack-consistency is a generally desirable behavior and can protect an ML model from a wide range of unreasonable behaviors. 
As argued previously, it is natural to expect that, for any individual, there would be a prediction under no slack (i.e. perfectly satisfy fairness constraint) and a prediction under infinite slack (i.e. unconstrained), and that for any slack in between, the predictions would change monotonically between these two extremes.
This way, an individual would under no circumstances be unfairly treated for the benefit of the group.
Slack-consistency can also encourage predictions to be more robust: in practice, models have to be possibly frequently retrained, and slack consistency can ensure that small or even no changes in the fairness requirements would not lead to unreasonable changes in individual predictions. 
For these reasons, researchers and practitioners may find it beneficial to enforce slack-consistency itself in order to better guarantee a classifier's behavior.

\clearpage
{
\bibliography{ref}
\bibliographystyle{plain}
}
{\onecolumn
\newpage
\appendix
\section{Supporting Theoretical Results}
\begin{theorem}[Convexity of mis-classification error]
Considering all possible normalized thresholds $\tau\in[0,1]$ of a Bayes-optimal score function, the mis-classification error is a convex function of the true-positive rate (equal opportunity).
The mis-classification error is also convex with respect to positive prediction rate (demographic parity).
\end{theorem}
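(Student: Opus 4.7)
The plan is to prove each claim by parametrizing the family of thresholdings of a Bayes-optimal score function $\scorefn$ by a single scalar that sweeps the classifier from predicting no one positively to predicting everyone positively, and then showing that the derivative of the misclassification error with respect to the relevant group rate (positive prediction rate for demographic parity, true positive rate for equal opportunity) is monotonically non-decreasing in that rate. Convexity of a univariate function follows from having non-decreasing right-derivatives.

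First I would fix one group and let $p(x) = P(y=1 \mid x)$ be the Bayes-optimal score. Because $\scorefn$ is Bayes-optimal, a normalized threshold $\tau \in [0,1]$ (with randomization at the boundary) is equivalent to predicting positively exactly the top-$(1-\tau)$ mass of points by their value of $p(x)$. Sweeping $\tau$ downward thus flips points from negative to positive in decreasing order of $p$. I would then compute the infinitesimal change in each quantity when a mass $d\mu$ of points at score level $p$ is flipped: the positive prediction rate grows by $d\mu$; the true positive rate grows by $p\,d\mu / Z$ with $Z = \E[p]$; and the misclassification error changes by $(1 - 2p)\,d\mu$, since each flipped point trades a false-negative cost of $p$ for a false-positive cost of $1-p$.

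Dividing these differentials yields
\begin{align*}
\frac{d\,\mathrm{Error}}{d\,\mathrm{PPR}} = 1 - 2p, \qquad \frac{d\,\mathrm{Error}}{d\,\mathrm{TPR}} = Z\left(\frac{1}{p} - 2\right).
\end{align*}
Because $p$ decreases as either rate grows (this is precisely the Bayes-optimal ordering), both right-hand sides are monotonically non-decreasing as functions of the rate. Hence Error is convex as a function of positive prediction rate, and separately as a function of true positive rate.

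The main obstacle is making this differential argument rigorous when the score distribution contains atoms, which is exactly why the theorem's statement allows normalized (randomized) thresholds in the first place. I would handle this by recasting the argument discretely: order the $N$ training examples so that $p_1 \ge p_2 \ge \cdots \ge p_N$, and observe that Error viewed as a piecewise linear function of either rate has slope $1 - 2p_k$ (respectively $Z(1/p_k - 2)$) on the segment where the $k$-th point is the marginal randomized point. The Bayes-optimal ordering forces these slopes to be non-decreasing in $k$, which is exactly the definition of convexity for a piecewise linear function; ties in $p$ merely merge adjacent segments without altering the slope pattern, so the conclusion carries through unchanged.
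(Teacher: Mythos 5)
Your proof is correct, and it reaches the result by a genuinely different route than the paper does. The paper's appendix proof imports the concavity and one-to-one property of the Bayes-optimal ROC curve as a known result (citing Sacchetto and Ventura) and then finishes by algebra on rates: writing the error as $\alpha_1(1-\tpr)+\alpha_2\,\fpr$, it gets convexity in $\tpr$ from convexity of $\fpr$ as a function of $\tpr$, and for demographic parity it uses $\pr=\alpha_1\tpr+\alpha_2\fpr$ to show $\fpr$ is convex in $\pr$ and rewrites the error as $\alpha_1-\pr+2\alpha_2\fpr$. You instead prove the underlying geometric fact from first principles: your marginal-slope computation, with $d\,\mathrm{Error}/d\,\mathrm{PPR}=1-2p$ and $d\,\mathrm{Error}/d\,\mathrm{TPR}=Z(1/p-2)$ non-decreasing along the threshold sweep because Bayes-optimality flips points in decreasing order of $p$, is essentially a direct derivation of the ROC concavity that the paper treats as a black box. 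What each buys: the paper's argument is shorter given the cited result and handles both fairness notions by uniform compositional reasoning; yours is self-contained, needs no external reference, and your discrete piecewise-linear recasting (slopes $1-2p_k$, resp.\ $Z(1/p_k-2)$, non-decreasing in $k$) is exactly adapted to the paper's empirical, normalized-threshold setting, where atoms are the norm rather than a nuisance. One pinhole, which your proof shares with the paper's rather than introduces: if some mass has $p=0$ exactly, then in the equal-opportunity case flipping those points changes the error without changing $\tpr$, so ``error as a function of $\tpr$'' is only well defined at $\tpr=1$ under the minimal-error convention $\lossfn_i(b_i)=\min_{\biasfn_i(\tau_i)=b_i}\lossfn_i(\tau_i)$ that the main theorem already adopts (your formula's infinite slope $Z(1/p_k-2)$ at $p_k=0$ is consistent with convexity of that lower envelope); a one-line remark fixing this convention would make your discrete argument airtight.
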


\begin{proof}
We first note that the ROC of a Bayes-optimal score function is a concave one-to-one function~\cite{sacchetto2018proper}. That is, the true positive rate $\tpr$ is a concave one-to-one function with respect to the false positive rate $\fpr$, and the false positive rate $\fpr$ is a convex one-to-one function with respect to the true positive rate $\tpr$.

The mis-classification error of a thresholding may be expressed as $\alpha_1\cdot(1-\tpr) + \alpha_2\cdot\fpr$, where $\alpha_1,\alpha_2>0$ are the proportions of positive and negative labelled points in the dataset, respectively.
Since the first term of this expression is linear and the second term convex with respect to $\tpr$, we conclude that the mis-classification error is a convex function of $\tpr$, as desired.

To characterize the mis-classification with respect to positive prediction rate $\pr$, we note that $\pr = \alpha_1\cdot \tpr + \alpha_2 \cdot \fpr$. Since the first term of this expression for $\pr$ is concave one-to-one and the second term linear increasing with respect to $\fpr$, we deduce that $\pr$ is a concave one-to-one function with respect to $\fpr$; equivalently, $\fpr$ is a convex one-to-one function with respect to $\pr$.
The mis-classification error may be expressed as $\alpha_1 - \pr + 2\alpha_2\cdot\fpr$, which is the sum of a constant, linear, and convex function with respect to $\pr$. Thus, we conclude that the mis-classification error is a convex function of $\pr$, as desired.
\end{proof}

\section{Additional details of Theorem~\ref{theorem:counterexample_bayesoptimal} and Figure~\ref{fig:discrete_equal_opp_post_processing}}
\begin{figure*}[h]
\begin{center}
\begin{tabular}{lll}
\includegraphics[width=0.32\textwidth]{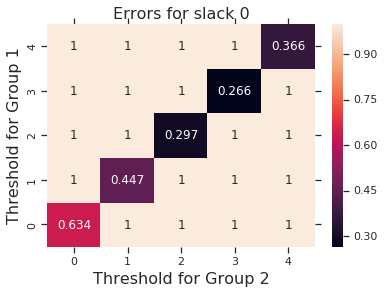} &
\includegraphics[width=0.32\textwidth]{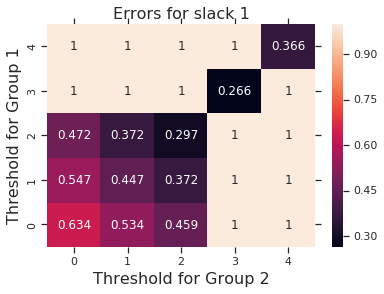}  & 
\includegraphics[width=0.32\textwidth]{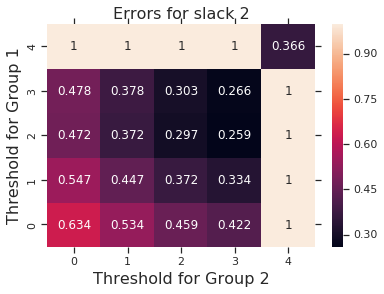}  \\
\includegraphics[width=0.32\textwidth]{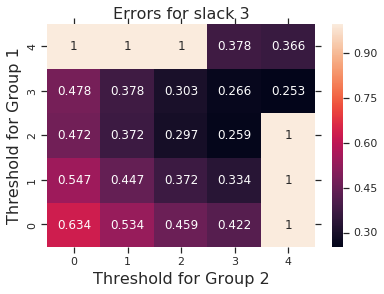} &
\includegraphics[width=0.32\textwidth]{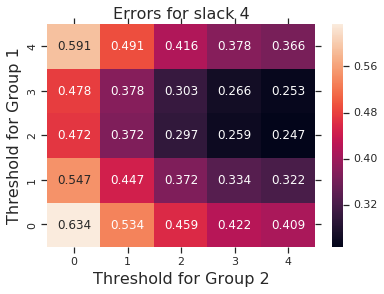}  & 
\includegraphics[width=0.32\textwidth]{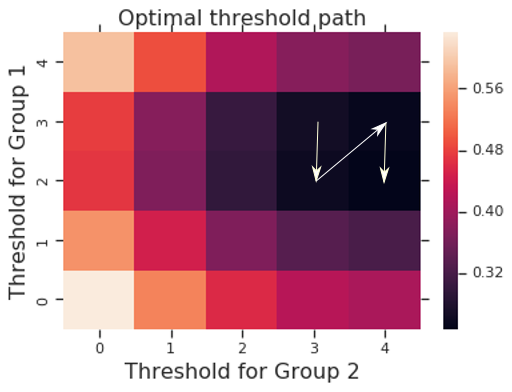}  
\end{tabular}
\end{center}
\caption{More detailed calculations for  Theorem~\ref{theorem:counterexample_bayesoptimal} and Figure~\ref{fig:discrete_equal_opp_post_processing}. 
The first $5$ charts show the error rate given pairs of thresholds for each of the groups for slacks $0, 1, 2, 3, 4$. If a pair was infeasible for a particular slack, then we display the error as $1$ for convenience. The last chart shows the path of the optimal pairs as slack increases showing that it is not monotonic in the thresholds.} 
  \label{fig:discrete_equal_opp_post_processing_all}
\end{figure*}
}
\end{document}